\newcommand{\B}[1]{\mathbf{#1}}
\newtheorem{assumption}{Assumption}
\newtheorem{theorem}{Theorem}[section]
\newtheorem{lemma}[theorem]{Lemma}
\crefname{lemma}{lemma}{lemmas}
\Crefname{lemma}{Lemma}{Lemmas}
\crefname{thm}{theorem}{theorems}
\Crefname{thm}{Theorem}{Theorems}
\crefname{section}{Sec.}{sections}
\crefname{figure}{Fig.}{figures}
\newcommand{\nn}{\emph{Neural-Lander}}
\newcommand{\baseline}{Baseline Controller}
\newcommand{\baselinelong}{Baseline Nonlinear Tracking Controller}
\title{\LARGE \bf
Neural Lander: Stable Drone Landing Control\\ Using Learned Dynamics
}
\author{Guanya Shi$^{*,1}$, Xichen Shi$^{*,1}$, Michael O'Connell$^{*,1}$, Rose Yu$^{2}$, Kamyar Azizzadenesheli$^{3}$, \\Animashree Anandkumar$^{1}$, Yisong Yue$^{1}$, and Soon-Jo Chung$^{1}$
\thanks{$^{*}$ These authors contributed equally to this work. $^{1}$California Institute of Technology, $^{2}$Northeastern University, $^{3}$University of California, Irvine. }
}
\begin{document}

\maketitle
\thispagestyle{empty}
\pagestyle{empty}

%%%%%%%%%%%%%%%%%%%%%%%%%%%%%%%%%%%%%%%%%%%%%%%%%%%%%%%%%%%%%%%%%%%%%%%%%%%%%%%%
\begin{abstract}
Precise near-ground trajectory control  is difficult for multi-rotor drones, due to the complex aerodynamic effects caused by interactions between multi-rotor airflow and the environment. Conventional control methods often fail to properly account for these complex effects and fall short in accomplishing smooth landing. 
In this paper, we present a novel deep-learning-based robust nonlinear controller (\nn) that improves control performance of a quadrotor during landing. Our approach combines a nominal dynamics model with a Deep Neural Network (DNN) that learns high-order interactions. We apply spectral normalization (SN) to constrain the Lipschitz constant of the DNN. Leveraging this Lipschitz property, we design a nonlinear feedback linearization controller using the learned model and prove system stability with disturbance rejection. To the best of our knowledge, this is the first DNN-based nonlinear feedback controller with stability guarantees that can utilize arbitrarily large neural nets. 
Experimental results demonstrate that the proposed controller significantly outperforms a \baselinelong{} in both landing and cross-table trajectory tracking cases. %In particular, we show that \ry{nn can .., remove the rest} compared to the baseline, \ry{I feel this is too  detailed for abstract, just summarize, is it more stable or more precise? by how much? } \nn{} can decrease error in $z$ axis from \SI{0.13}{m} to $0$, and mitigate average $x$ and $y$ drifts by as much as 90\%, in the landing case. Meanwhile, \nn{} can decrease $z$ error from \SI{0.153}{m} to \SI{0.027}{m}, in the cross-table trajectory tracking task.
%achieve smooth landings in scenarios where the PD controller cannot even land at all. %\anima{by how much? what is \% relative improvement?} 
We also empirically show that the DNN generalizes well to unseen data outside the training domain.
\end{abstract}

%%%%%%%%%%%%%%%%%%%%%%%%%%%%%%%%%%%%%%%%%%%%%%%%%%%%%%%%%%%%%%%%%%%%%%%%%%%%%%%%
\section{Introduction}
% problem 
Unmanned Aerial Vehicles (UAVs) require high precision control of aircraft positions, especially during landing and take-off. This problem is challenging largely due to complex interactions of rotor and wing airflows with the ground. The aerospace community has long identified such ground effect that can cause an increased lift force and a reduced aerodynamic drag. These effects can be both helpful and disruptive in flight stability~\cite{cheeseman1955effect}, and the complications are exacerbated with multiple rotors. Therefore, performing automatic landing of UAVs is risk-prone, and requires expensive high-precision sensors as well as carefully designed controllers. 

% challenge
Compensating for ground effect is a long-standing problem in the aerial robotics community. Prior work has largely focused on mathematical modeling (e.g. \cite{nonaka2011integral}) as part of system identification (ID). These models are later used to approximate aerodynamics forces during flights close to the ground and combined with controller design for feed-forward cancellation  (e.g. \cite{danjun2015autonomous}). However, existing theoretical ground effect models are derived based on steady-flow conditions, whereas most practical cases exhibit unsteady flow. Alternative approaches, such as integral or adaptive control methods, often suffer from slow response and delayed feedback. \cite{Berkenkamp2016SafeOpt} employs Bayesian Optimization for open-air control but not for take-off/landing. Given these limitations, the precision of existing fully automated  systems for UAVs are still insufficient for landing and take-off, thereby necessitating the guidance of a human UAV operator during those phases.

% \yisong{This last sentence implies that controlling UAVs during take-off and landing cannot be done before, which is clearly false (otherwise existing drones can't land).   I think you mean something more like principled design or automated design. Not sure what the best wording is but it should be something that is both true and positions the remainder of the intro as strongly as possible.}

% related approach
To capture complex aerodynamic interactions without overly-constrained by conventional modeling assumptions, we take a machine-learning (ML) approach to build a black-box ground effect model using Deep Neural Networks (DNNs). However, incorporating such models into a UAV controller faces three key challenges. First, it is challenging to collect sufficient real-world training data, as DNNs are notoriously data-hungry. Second, due to high-dimensionality, DNNs can be unstable and generate unpredictable output, which makes the system  susceptible to instability in the feedback control loop. Third, DNNs are often difficult to analyze, which makes it difficult to design provably stable DNN-based controllers.

% \yisong{I feel like this paragraph can be written more generically and thus can cite a lot more papers.  For instance, the core problem in the first example is that the downstream controller is designed in a way that doesn't link with the system ID task.  In the second example, the problem is requiring too much training data. In the third example, the problem is no guarantees on stability.}
The aforementioned challenges pervade previous works using DNNs to capture high-order non-stationary dynamics.
%Using DNNs to approximate high-order non-stationary dynamics  has recently received considerable attention.  
For example,~\cite{abbeel2010autonomous, punjani2015deep} use DNNs to improve system ID of helicopter aerodynamics, but not for controller design. Other approaches aim to generate reference inputs or trajectories from DNNs \cite{bansal2016learning,li2017deep,zhou2017design,sanchez2018real}. However, these approaches can lead to challenging optimization problems  \cite{bansal2016learning}, or heavily rely on well-designed closed-loop controller and require a large number of labeled training data \cite{li2017deep, zhou2017design, sanchez2018real}.  A more classical approach of using DNNs is direct inverse control~\cite{balakrishnan1996neurocontrol, frye2014direct,suprijono2017direct} but the non-parametric nature of a DNN controller also makes it challenging to guarantee stability and robustness to noise. \cite{Berkenkamp2017SafeRL} proposes a provably stable model-based Reinforcement Learning method based on Lyapunov analysis, but it requires a potentially expensive discretization step and relies on the native Lipschitz constant of the DNN.  %But training the DNN in a way that can maintain stability of the system is a significant challenge and open problem. the method is presented without stability guarantees of the network

\emph{Contributions.} In this paper, we propose a learning-based controller, \nn{}, to improve the precision of quadrotor landing with guaranteed stability.  Our approach directly learns the ground effect on coupled unsteady aerodynamics and vehicular dynamics. We use deep learning for system ID of residual dynamics and then integrate it with nonlinear feedback linearization control. 

We train DNNs with layer-wise spectrally normalized weight matrices. We prove that the resulting controller is globally exponentially stable under bounded learning errors. This is achieved by exploiting the Lipschitz bound of spectrally normalized DNNs.  It has earlier been shown that spectral normalization of DNNs leads to good generalization, i.e. stability in a learning-theoretic sense~\cite{miyato2018spectral}. It is intriguing that spectral normalization simultaneously guarantees stability both in a learning-theoretic and a control-theoretic sense.

We evaluate \nn{} on trajectory tracking of quadrotor during take-off, landing and cross-table maneuvers.  \nn{} is able to land a quadrotor much  more accurately than a \baselinelong{} with a pre-identified system. In particular, we show that compared to the baseline, \nn{} can decrease error in $z$ axis from \SI{0.13}{m} to $0$, mitigate $x$ and $y$ drifts by as much as 90\%, in the landing case. Meanwhile, \nn{} can decrease $z$ error from \SI{0.153}{m} to \SI{0.027}{m}, in the cross-table trajectory tracking task.\footnote{Demo videos: \url{https://youtu.be/FLLsG0S78ik}} We also demonstrate that the learned model can handle temporal dependency, and is an improvement over the steady-state theoretical models.

\section{Problem Statement: Quadrotor Landing}
Given quadrotor states as global position $\B{p} \in \mathbb{R}^3$, velocity $\B{v} \in \mathbb{R}^3$, attitude rotation matrix $R \in \mathrm{SO}(3)$, and body angular velocity $\bm{\omega} \in \mathbb{R}^3$, we consider the following dynamics:
\begin{subequations}
\begin{align}
\dot{\B{p}} &= \B{v}, &  
m\dot{\B{v}} &=m\B{g}+R\B{f}_u + \B{f}_a,\label{eq:pos_dynamics} \\ 
\dot{R}&=RS(\bm{\omega}), & 
J\dot{\bm{\omega}} &= J \bm{\omega} \times \bm{\omega}  + \bm{\tau}_u + \bm{\tau}_a,
\label{eq:att_dynamics}
\end{align}
\label{eq:dynamics}
\end{subequations}
% \begin{equation}
% \begin{aligned}
% \dot{\B{p}} &= \B{v}, &  
% m\dot{\B{v}} &=m\B{g}+R\B{f}_u + \B{f}_a,\\
% \dot{R}&=RS(\bm{\omega}), & 
% J\dot{\bm{\omega}} &= J \bm{\omega} \times \bm{\omega}  + \bm{\tau}_u + \bm{\tau}_a,
% \label{eq:dynamics}
% \end{aligned}
% \end{equation}
where $m$ and $J$ are mass and inertia matrix of the system respectively, $S(\cdot)$ is skew-symmetric mapping. $\B{g} = [0, 0, -g]^\top$ is the gravity vector, $\B{f}_u = [0, 0, T]^\top$ and $\bm{\tau}_u = [\tau_x, \tau_y, \tau_z]^\top$ are the total thrust and body torques from four rotors predicted by a nominal model. We use $\bm{\eta} = [T, \tau_x, \tau_y, \tau_z]^\top$ to denote the output wrench. Typical quadrotor control input uses squared motor speeds $\B{u} = [n_1^2,n_2^2,n_3^2,n_4^2]^\top$, and is linearly related to the output wrench $\bm{\eta} = B_0 \B{u}$, with
\begin{equation}
B_0=
\left[\begin{smallmatrix}
c_T & c_T & c_T & c_T \\
0 & c_T l_{\mathrm{arm}} & 0 & - c_T l_{\mathrm{arm}}\\
-c_T l_{\mathrm{arm}} & 0 & c_T l_{\mathrm{arm}} & 0 \\
-c_Q & c_Q & -c_Q & c_Q
\end{smallmatrix}\right],
\label{eq:bmatrix}
\end{equation}
% \begin{equation}
% \left[\begin{smallmatrix}
% T \\ \tau_x \\ \tau_y \\ \tau_z
% \end{smallmatrix}\right]=
% \left[\begin{smallmatrix}
% c_T & c_T & c_T & c_T \\
% 0 & c_T l_{\mathrm{arm}} & 0 & - c_T l_{\mathrm{arm}}\\
% -c_T l_{\mathrm{arm}} & 0 & c_T l_{\mathrm{arm}} & 0 \\
% -c_Q & c_Q & -c_Q & c_Q
% \end{smallmatrix}\right]
% \left[\begin{smallmatrix}
% n_1^2 \\ n_2^2 \\ n_3^2 \\ n_4^2
% \end{smallmatrix}\right]
% \label{eq:bmatrix}
% \end{equation}
where $c_T$ and $c_Q$ are rotor force and torque coefficients, and $l_{\mathrm{arm}}$ denotes the length of rotor arm. 
The key difficulty of precise landing is the influence of unknown disturbance forces $\B{f}_a=[f_{a,x}, f_{a,y}, f_{a,z}]^\top$ and  torques $\bm{\tau}_a=[\tau_{a,x}, \tau_{a,y}, \tau_{a,z}]^\top$, which originate from complex aerodynamic interactions between the quadrotor and the environment. 

% For example, when the quadrotor is close to ground during the landing process, additional vertical aerodynamic force $f_{a,z}$ will be significant and is known as \textit{ground effect}. Also, air drag is usually proportional to $\|\B{v}\|^2$ which becomes significant at higher speed.

\emph{Problem Statement:} We aim to improve controller accuracy by learning the unknown disturbance forces $\B{f}_a$ and torques $\bm{\tau}_a$ in~\cref{eq:dynamics}. As we mainly focus on landing and take-off tasks, the attitude dynamics is limited and the aerodynamic disturbance torque $\bm{\tau}_a$ is bounded. Thus position dynamics~\cref{eq:pos_dynamics} and $\B{f}_a$ will our primary concern. We first approximate $\B{f}_a$ using a DNN with spectral normalization to guarantee its Lipschitz constant, and then incorporate the DNN in our exponentially-stabilizing controller. Training is done off-line, and the learned dynamics is applied in the on-board controller in real-time to achieve smooth landing and take-off.
\label{sec:prelim}

%%%%%%%%%%%%%%%%%%%%%%%%%%%%%%%%%%%%%%%%%%%%%%%%%%%%%%%%%%%%%%%%%%%%%%%%%%%%%%%%
\section{Dynamics Learning using DNN}
\label{sec:network}
% \yisong{I don't like the section title.  Perhaps "Learning Dynamics using DNNs" or "Learning Constrained DNNs for Modeling Dynamics"?}
We learn the unknown disturbance force $\B{f}_a$ using a DNN with Rectified Linear Units (ReLU) activation. In general, DNNs equipped with ReLU converge faster during training, demonstrate more robust behavior with respect to changes in hyperparameters, and have fewer vanishing gradient problems compared to other activation functions such as  \textit{sigmoid} \cite{krizhevsky2012imagenet}.

% Empirical study has shown that the ReLU nonlinearities mainly results in higher efficiency and performance compared to its existing counterpart activations, e.g. \textit{sigmoid, tanh} \cite{krizhevsky2012imagenet,ioffe2015batch}, and it is the main theme of nonlinearities used in DNNs.

% Moreover, they have also a nice Lipschitz norm property, which enables us to guarantee the stability and convergence of the proposed controller (detailed discussion in Sec.~\ref{sec:controller}-\ref{sec:theory})

% We also empirically validate that the ReLU activations are much more 
% The reasons why we chose ReLU as the choice of nonlinearity in our models is that it has been shown ReLU rectifies practically are more effective than the other existing activations, e.g. \textit{sigmoid, tanh}. They mainly converge faster during the course of learning, more robust with respect to hyper parameters changes, while also enhance the saturation limitation of other activations \cite{krizhevsky2012imagenet}. Furthermore, they have also nice Lipschitz norm properties, which enables us to guarantee stability and convergence of proposed controller (detailed discussion in Sec.~\ref{sec:controller}-\ref{sec:theory}).
\subsection{ReLU Deep Neural Networks}
A ReLU deep neural network represents the functional mapping from  the input $\B{x}$ to the output $f(\B{x},\bm{\theta})$, parameterized by the DNN  weights $\bm{\theta}={W^1,\cdots,W^{L+1}}$:
\begin{equation}
f(\B{x},\bm{\theta})=W^{L+1}\phi(W^L(\phi(W^{L-1}(\cdots\phi(W^1\B{x})\cdots)))),
\label{eq:relu}
\end{equation}
where the activation function $\phi(\cdot)=\max(\cdot,0)$ is called the element-wise ReLU function. ReLU is less computationally expensive than \textit{tanh} and \textit{sigmoid} because it involves simpler mathematical operations. However, deep neural networks are usually trained by first-order gradient based optimization, which is highly sensitive on the curvature of the training objective and can be unstable \cite{salimans2016weight}. To alleviate this issue, we apply the spectral normalization technique \cite{miyato2018spectral}.

\subsection{Spectral Normalization}
Spectral normalization stabilizes DNN training by constraining the Lipschitz constant of the objective function. Spectrally normalized DNNs have also been shown to generalize well \cite{bartlett2017spectrally}, which is an indication of stability in machine learning. Mathematically, the Lipschitz constant of a function  $\|f\|_{\text{Lip}}$ is defined as the smallest value such that
\[\forall \, \B{x}, \B{x}':\ \|f(\B{x})-f(\B{x}')\|_2/\|\B{x}-\B{x}'\|_2\leq \|f\|_{\text{Lip}}.\]
It is  known that the Lipschitz constant of a general differentiable function $f$ is the maximum spectral norm (maximum singular value) of its gradient over its domain $\|f\|_{\text{Lip}}= \sup_{\B{x}}\sigma(\nabla f(\B{x}))$.

The ReLU DNN in \cref{eq:relu} is a composition of functions.  Thus we can bound the Lipschitz constant of the network by constraining the spectral norm of each layer $g^l(\B{x})=\phi(W^l \B{x})$. Therefore, for a linear map $g(\B{x})=W\B{x}$, the spectral norm of each layer is given by $\|g\|_{\text{Lip}}=\sup_{\B{x}}\sigma(\nabla g(\B{x}))=\sup_{\B{x}}\sigma(W)=\sigma(W)$. 
%
%  By definition, the Lipschitz norm $\|g\|_{\text{Lip}}$ is equal to $\sup_{\B{h}}\sigma(\nabla g(\B{h}))$, where $\sigma(A)$ is the spectral norm of some matrix $A$ (i.e., the largest singular value of $A$). 
%
Using the fact that the Lipschitz norm of ReLU activation function $\phi(\cdot)$ is equal to 1, with the inequality $\|g_1\circ g_2\|_{\text{Lip}}\leq\|g_1\|_{\text{Lip}}\cdot\|g_2\|_{\text{Lip}}$, we can find the following bound on $\|f\|_{\text{Lip}}$:
\begin{equation}
\|f\|_{\text{Lip}}\leq\|g^{L+1}\|_{\text{Lip}}\cdot\|\phi\|_{\text{Lip}}\cdots\|g^1\|_{\text{Lip}}=\prod_{l=1}^{L+1}\sigma(W^l).
\label{eq:upper_bound}
\end{equation}
In practice, we can apply spectral normalization to the weight matrices in each layer during training as follows:
\begin{equation}
\bar{W} = W / \sigma(W)\cdot \gamma^{\frac{1}{L+1}},
\label{eq:sn}
\end{equation}
where $\gamma$ is the intended Lipschitz constant for the DNN. The following lemma bounds the Lipschitz constant of a ReLU DNN   with  spectral normalization.
\begin{lemma}For  a multi-layer ReLU network $f(\B{x},\bm\theta)$, defined in~\cref{eq:relu} without an activation function on the output layer. Using spectral normalization, the Lipschitz constant of the entire network satisfies:
\begin{equation*}
\lVert f(\B{x}, \bar{\bm{\theta}}) \rVert_{\text{Lip}}\leq \gamma,
\label{eq:nn_lip}
\end{equation*}
with spectrally-normalized parameters $\bar{\bm{\theta}}={\bar{W} ^1,\cdots,\bar{W} ^{L+1}}$.
\label{lemma:sn}
\end{lemma}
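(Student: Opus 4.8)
The plan is to chain together the three facts already assembled in the text: (i) the Lipschitz constant of a composition is bounded by the product of the Lipschitz constants, $\|g_1\circ g_2\|_{\text{Lip}}\leq\|g_1\|_{\text{Lip}}\cdot\|g_2\|_{\text{Lip}}$; (ii) the ReLU map $\phi(\cdot)$ has $\|\phi\|_{\text{Lip}}=1$; and (iii) for a linear layer $g(\B{x})=W\B{x}$ one has $\|g\|_{\text{Lip}}=\sigma(W)$. Writing the spectrally-normalized network $f(\B{x},\bar{\bm\theta})$ as the composition of the linear maps $\bar g^l(\B{x})=\bar W^l\B{x}$ interleaved with ReLU activations exactly as in~\cref{eq:relu}, applying the bound~\cref{eq:upper_bound} verbatim gives
\[
\lVert f(\B{x},\bar{\bm\theta})\rVert_{\text{Lip}}\;\leq\;\prod_{l=1}^{L+1}\sigma(\bar W^l).
\]

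The one substantive step is to evaluate $\sigma(\bar W^l)$ for the normalized weights. From the definition~\cref{eq:sn}, $\bar W^l = W^l/\sigma(W^l)\cdot\gamma^{1/(L+1)}$, and since the spectral norm is absolutely homogeneous, $\sigma(\bar W^l)=\sigma(W^l)/\sigma(W^l)\cdot\gamma^{1/(L+1)}=\gamma^{1/(L+1)}$ (assuming $\sigma(W^l)\neq 0$, i.e. $W^l$ is not the zero matrix, which is harmless). Substituting into the product yields
\[
\lVert f(\B{x},\bar{\bm\theta})\rVert_{\text{Lip}}\;\leq\;\prod_{l=1}^{L+1}\gamma^{\frac{1}{L+1}}\;=\;\gamma^{\sum_{l=1}^{L+1}\frac{1}{L+1}}\;=\;\gamma,
\]
which is the claimed bound.

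Honestly, there is no real obstacle here — the lemma is essentially bookkeeping once~\cref{eq:upper_bound} and~\cref{eq:sn} are in hand. The only point that warrants a sentence of care is the homogeneity of $\sigma(\cdot)$ used to collapse $\sigma(\bar W^l)$ to $\gamma^{1/(L+1)}$, and the tacit non-degeneracy assumption $\sigma(W^l)>0$ needed for~\cref{eq:sn} to be well-defined in the first place; I would state these explicitly rather than gloss over them. I would also note, for completeness, that the bound holds uniformly in $\B{x}$, so no assumption on the input domain is needed — the global Lipschitz property is exactly what makes this bound useful for the stability analysis in the sequel.
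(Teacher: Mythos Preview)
Your proof is correct and follows exactly the approach the paper takes: invoke the layer-wise product bound~\cref{eq:upper_bound}, then use the definition~\cref{eq:sn} to reduce each factor to $\gamma^{1/(L+1)}$. The paper's own proof is a two-line sketch of precisely this argument, so your version is simply a more explicit write-up of the same idea.
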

\begin{proof}
As in \cref{eq:upper_bound}, the Lipschitz constant can be written as a composition of spectral norms over all layers. The proof follows from the spectral norms constrained as in \cref{eq:sn}.
\end{proof}
% As a result, we can regulate the Lipschitz constant of the network during training.
%\textcolor{red}{SJC: maybe, add a simple DNN+ReLU diagram with $W's$ and $\phi$ notations? (see the example below)}.
%\begin{figure}[h!]
%	\includegraphics[height=0.12\textwidth]{exampleReLUpic.jpg}
%\end{figure}
\subsection{Constrained Training}
We apply gradient-based optimization to train the ReLU DNN with a bounded Lipschitz constant. Estimating $\B{f}_a$ in (\ref{eq:dynamics}) boils down to optimizing the parameters $\bm\theta$ in the ReLU network in \cref{eq:relu}, given the observed value of $\B{x}$ and the target output. In particular, we want to control the Lipschitz constant of the ReLU network.

The optimization objective is as follows, where we minimize the prediction error with constrained Lipschitz constant:
\begin{align}
& \underset{\bm\theta}{\text{minimize}}
& & \sum_{t=1}^{T}\frac{1}{T}\|\B{y}_{t}-f(\B{x}_t,\bm\theta)\|_2 \nonumber \\ 
& \text{subject to}
& & \|f\|_{\text{Lip}} \leq \gamma.
\label{eq:opt}
\end{align}
In our case, $\B{y}_{t}$ is the observed disturbance forces and $\B{x}_t$ is the observed states and control inputs. According to the upper bound in~\cref{eq:upper_bound}, we can substitute the constraint  by minimizing the spectral norm of the weights in each layer.
% \begin{align}
% %& \underset{\bm\theta}{\text{minimize}}
% %& & \sum_{t=1}^{T}\frac{1}{T}\|\B{f}_{a,t}-f(\B{x}_t;\bm\theta)\|_2 \\
% & \text{subject to}
% & & \sigma(W^l) \leq 1, \; l=1,\ldots,L+1
% \label{eq:sigma}
% \end{align}
%
We use stochastic gradient descent (SGD) to optimize \cref{eq:opt} and apply spectral normalization to regulate the weights. From Lemma \ref{lemma:sn}, the trained ReLU DNN has a Lipschitz constant.
% \begin{align}
% W^l{:=}\frac{W^l}{\sigma(W^l)}, \; \text{if}\; \sigma(W^l)>1
% \end{align}

%%%%%%%%%%%%%%%%%%%%%%%%%%%%%%%%%%%%%%%%%%%%%%%%%%%%%%%%%%%%%%%%%%%%%%%%%%%%%%%%
\section{Neural Lander Controller Design}\label{sec:controller}
Our \nn{} controller for 3-D trajectory tracking is constructed as a nonlinear feedback linearization controller whose stability guarantees are obtained using the spectral normalizaion of the DNN-based ground-effect model. We then exploit the Lipschitz property of the DNN to solve for the resulting control input using fixed-point iteration.
% \yisong{I think the opening paragraph needs to summarize the control design.}  along the desired attitude trajectory $R_d(t)$.

\subsection{Reference Trajectory Tracking}
The position tracking error is defined as
$\tilde{\B{p}} = \B{p} - \B{p}_d$. Our controller uses a composite variable $\B{s} = 0$ as a manifold on which $\tilde{\B{p}}(t) \to 0$ exponentially:
% \begin{equation}
% \B{s} = \dot{\tilde{\B{p}}} + 2\Lambda \tilde{\B{p}}+\Lambda^2\int^{t}_{0}\tilde{\B{p}}(\tau)d\tau = \dot{\B{p}}-\B{v}_r
% \label{eq:composite}
% \end{equation}
\begin{equation}
\B{s} = \dot{\tilde{\B{p}}} + \Lambda \tilde{\B{p}} = \dot{\B{p}}-\B{v}_r
\label{eq:composite}
\end{equation}
with $\Lambda$ as a positive definite or diagonal matrix. Now the trajectory tracking problem is transformed to tracking a reference velocity $\B{v}_r = \dot{\B{p}}_d-\Lambda \tilde{\B{p}}$.

Using the methods described in~\cref{sec:network}, we define $\hat{\B{f}}_a(\bm{\zeta},\B{u})$ as the DNN approximation to the disturbance aerodynamic forces, with $\bm{\zeta}$ being the partial states used as input features to the network. We design the total desired rotor force $\B{f}_d$ as
\begin{align}
\B{f}_d = (R\B{f}_u)_d = \bar{\B{f}}_d - \hat{\B{f}}_a, \text{with} \ \
\bar{\B{f}}_d = m\dot{\B{v}}_r -  K_v \B{s} - m\B{g}.
\label{eq:pos_control}
\end{align}
Substituting~\cref{eq:pos_control} into~\cref{eq:dynamics}, the closed-loop dynamics would simply become $m\dot{\B{s}} +K_v\B{s}=\bm{\epsilon}$, with approximation error $\bm{\epsilon} = \B{f}_a-\hat{\B{f}}_a$. Hence, $\tilde{\B{p}}(t) \to \B{0}$ globally and exponentially with bounded error, as long as $\|\bm{\epsilon}\|$ is bounded~\cite{slotine1991applied,bandyopadhyay2016nonlinear,shi2018nonlinear}.

Consequently, desired total thrust $T_d$ and desired force direction $\hat{k}_d$ can be computed as
\begin{equation}
T_d = \B{f}_d \cdot \hat{k}, \ \text{and} \ \
\hat{k}_d = \B{f}_d / \left\lVert\B{f}_d\right\rVert,
\label{eq:desired_attitude}
\end{equation}
with $\hat{k}$ being the unit vector of rotor thrust direction (typically $z$-axis in quadrotors). Using $\hat{k}_d$ and fixing a desired yaw angle, desired attitude $R_d$ can be deduced~\cite{morgan2016swarm}. 
We assume that a nonlinear attitude controller uses the desired torque $\bm{\tau}_d$ from rotors to track $R_d(t)$. One such example is in~\cite{shi2018nonlinear}:
\begin{equation}
\bm{\tau}_d = J\dot{\bm{\omega}}_r - J\bm{\omega}\times\bm{\omega}_r - K_{\omega}(\bm{\omega} - \bm{\omega}_r),
\label{eq:att_control}
\end{equation}
% where $\bm{\omega}_r=\mathcal{Z}^{-1}(\dot{\B{q}}_d+\Lambda_r\tilde{\B{q}})$ with $\dot{\B{q}}=\mathcal{Z}({\B{q}})\bm{\omega}$, 
where the reference angular rate $\bm{\omega}_r$ is designed similar to~\cref{eq:composite}, so that when $\bm{\omega}\to\bm{\omega}_r$, exponential trajectory tracking of a desired attitude $R_d(t)$ is guaranteed within some bounded error in the presence of bounded disturbance torques.

\subsection{Learning-based Discrete-time Nonlinear Controller}
%\yisong{I think the main section should be titled Neural Lander, not this sub-section.}
% Using methods described in~\cref{sec:network}, we define $\hat{\B{f}}_a(\bm{\zeta},\B{u})$ as the approximation to the disturbance aerodynamic forces, with $\bm{\zeta}$ being the partial states used as input features. 
% From~\cref{eq:pos_control}, in order to solve $\B{f}_d = \bar{\B{f}}_d - \hat{\B{f}}_a(\bm{\zeta}, \B{u})$. 
From~\cref{eq:bmatrix,eq:desired_attitude,eq:att_control}, we can relate the desired wrench $\bm{\eta}_d = [T_d, \bm{\tau}_d^\top]^\top$ with the control signal $\B{u}$ through
\begin{equation}
B_0\B{u} = \bm{\eta}_d = 
\begin{bmatrix}
\left(\bar{\B{f}}_d - \hat{\B{f}}_a(\bm{\zeta},\B{u})\right) \cdot \hat{k}\\
\bm{\tau}_d
\end{bmatrix}.
\label{eq:control_nonaffine}
\end{equation}
Because of the dependency of $\hat{\B{f}}_a$ on $\B{u}$, the control synthesis problem here is non-affine. Therefore, we propose the following fixed-point iteration method for solving~\cref{eq:control_nonaffine}:
\begin{equation}
\B{u}_k = B_0^{-1} \bm{\eta}_d\left(\B{u}_{k-1}\right),
\label{eq:discrete_control}
\end{equation}
where $\B{u}_k$ and $\B{u}_{k-1}$ are the control input for current and previous time-step in the discrete-time controller. Next, we prove the stability of the system and convergence of the control inputs in~\cref{eq:discrete_control}.

\section{Nonlinear Stability Analysis}\label{sec:theory}
%In order to properly prove the stability of the closed-loop system, we have to incorporate a discrete-time controller running at different rates with continuous dynamics. %In this section, we only provide separate stability analysis on continuous state convergence and discrete control convergence, and provide argument for their interaction to be bounded and shrinking over time. Nevertheless, 
The closed-loop tracking error analysis provides a direct correlation on how to tune the neural network and controller parameter to improve control performance and robustness.
\subsection{Control Allocation as Contraction Mapping}
We first show that the control input $\B{u}_k$ converges to the solution of~\cref{eq:control_nonaffine} when all states are fixed. 
\begin{lemma} 
\label{thm:lem1}
Define mapping $\B{u}_{k} = \mathcal{F}(\B{u}_{k-1})$ based on~\cref{eq:discrete_control} and fix all current states:
\begin{equation}
\mathcal{F}(\B{u}) = B_0^{-1}
\begin{bmatrix}
\left(\bar{\B{f}}_d - \hat{\B{f}}_a(\bm{\zeta},\B{u})\right) \cdot \hat{k}\\
\bm{\tau}_d
\end{bmatrix}.
\label{eq:contraction_map}
\end{equation}
If $\hat{\B{f}}_a(\bm{\zeta},\B{u})$ is $L_a$-Lipschitz continuous, and $\sigma(B_0^{-1})\cdot L_a < 1$; then $\mathcal{F}(\cdot)$ is a contraction mapping, and $\B{u}_k$ converges to unique solution of $\B{u}^{*} = \mathcal{F}(\B{u}^{*})$.
\end{lemma}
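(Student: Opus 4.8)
The plan is to verify directly that $\mathcal{F}$ is a contraction on $\mathbb{R}^4$ (with the Euclidean norm) under the stated hypotheses, and then invoke the Banach fixed-point theorem. First I would take two arbitrary control inputs $\B{u}_1, \B{u}_2$ and compute the difference $\mathcal{F}(\B{u}_1) - \mathcal{F}(\B{u}_2)$. Using the explicit form of the map in~\cref{eq:contraction_map}, note that when all current states are fixed, the only $\B{u}$-dependence in the bracketed wrench is through the term $\hat{\B{f}}_a(\bm{\zeta},\B{u})$ appearing in the first (thrust) component; the torque block $\bm{\tau}_d$ and the feedforward term $\bar{\B{f}}_d$ do not depend on $\B{u}$. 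Hence
\[
\mathcal{F}(\B{u}_1) - \mathcal{F}(\B{u}_2) = B_0^{-1}\begin{bmatrix} -\bigl(\hat{\B{f}}_a(\bm{\zeta},\B{u}_1) - \hat{\B{f}}_a(\bm{\zeta},\B{u}_2)\bigr)\cdot \hat{k} \\ \B{0} \end{bmatrix}.
\]

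Next I would bound the norm. Taking Euclidean norms and using the submultiplicative property $\|B_0^{-1}\B{w}\|_2 \le \sigma(B_0^{-1})\|\B{w}\|_2$, we get $\|\mathcal{F}(\B{u}_1) - \mathcal{F}(\B{u}_2)\|_2 \le \sigma(B_0^{-1})\,\bigl|(\hat{\B{f}}_a(\bm{\zeta},\B{u}_1) - \hat{\B{f}}_a(\bm{\zeta},\B{u}_2))\cdot \hat{k}\bigr|$. Since $\hat{k}$ is a unit vector, Cauchy–Schwarz gives $|(\cdot)\cdot\hat{k}| \le \|\hat{\B{f}}_a(\bm{\zeta},\B{u}_1) - \hat{\B{f}}_a(\bm{\zeta},\B{u}_2)\|_2$, and then $L_a$-Lipschitz continuity of $\hat{\B{f}}_a$ in $\B{u}$ yields the bound $\sigma(B_0^{-1})\,L_a\,\|\B{u}_1 - \B{u}_2\|_2$. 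By hypothesis $\sigma(B_0^{-1})\,L_a < 1$, so $\mathcal{F}$ is a contraction with modulus $\sigma(B_0^{-1})L_a$.

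Finally, since $\mathbb{R}^4$ is a complete metric space, the Banach fixed-point theorem applies: $\mathcal{F}$ has a unique fixed point $\B{u}^*$ satisfying $\B{u}^* = \mathcal{F}(\B{u}^*)$, and the iteration $\B{u}_k = \mathcal{F}(\B{u}_{k-1})$ converges to it from any initial $\B{u}_0$, which is exactly the solution of~\cref{eq:control_nonaffine}. I do not anticipate a serious obstacle here; the one point requiring a little care is making explicit that $\hat{\B{f}}_a$'s Lipschitz constant with respect to the full input has to dominate its Lipschitz behavior in the $\B{u}$-slice alone (which it does, since restricting to a slice cannot increase the Lipschitz constant), and tacitly assuming $\gamma$ — or rather the induced constant $L_a$ on the $\B{u}$ coordinates — is what the spectral normalization of Lemma~\ref{lemma:sn} controls. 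Connecting $L_a$ quantitatively to $\gamma$ and the scaling of the input features $\bm{\zeta},\B{u}$ is a modeling detail I would defer, since the lemma as stated only needs the qualitative Lipschitz bound.
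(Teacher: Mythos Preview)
Your proposal is correct and follows essentially the same route as the paper: compute $\mathcal{F}(\B{u}_1)-\mathcal{F}(\B{u}_2)$, bound it by $\sigma(B_0^{-1})\,L_a\,\|\B{u}_1-\B{u}_2\|_2$, and conclude via the contraction mapping (Banach fixed-point) theorem. If anything you are slightly more careful than the paper, which suppresses the $\hat{k}$-projection/Cauchy--Schwarz step and works on a compact set $\mathcal{U}$ without checking $\mathcal{F}(\mathcal{U})\subseteq\mathcal{U}$, whereas your choice of $\mathbb{R}^4$ sidesteps that issue.
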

%%%%%%%%%%%%%
\begin{proof}
$\forall\,\B{u}_1, \B{u}_2 \in \mathcal{U}$ with $\mathcal{U}$ being a compact set of feasible control inputs; and given fixed states as $\bar{\B{f}}_d$, $\bm{\tau}_d$ and $\hat{k}$, then:
\begin{align*}
\lVert\mathcal{F}(\B{u}_1) - \mathcal{F}(\B{u}_2) \rVert_2 &=\left\lVert B_0^{-1} \left(\hat{\B{f}}_a(\bm{\zeta},\bm{u}_1) - \hat{\B{f}}_a(\bm{\zeta},\bm{u}_2)\right) \right\rVert_2 \\
&\leq \sigma(B_0^{-1}) \cdot L_a \left\lVert \B{u}_1 - \B{u}_2\right\rVert_2.
\end{align*}
Thus, $\exists \ \alpha < 1, \text{s.t} \ \lVert\mathcal{F}(\B{u}_1) - \mathcal{F}(\B{u}_2) \rVert_2 < \alpha \left\lVert \B{u}_1 - \B{u}_2\right\rVert_2$. Hence, $\mathcal{F}(\cdot)$ is a contraction mapping. 
\end{proof}
%%%%%%%%%%%

\subsection{Stability of Learning-based Nonlinear Controller}
Before continuing to prove the stability of the full system, we make the following assumptions.
\begin{assumption}
The desired states along the position trajectory $\B{p}_d(t)$, $\dot{\B{p}}_d(t)$, and $\ddot{\B{p}}_d(t)$ are bounded.
\end{assumption}
% The desired position trajectory and its derivatives can admit tight bounds through optimization~\cite{morgan2016swarm,mellinger2011minimum} or simple clipping. 
\begin{assumption}
One-step difference of control signal satisfies $\left\lVert \B{u}_k - \B{u}_{k-1}\right\rVert \leq \rho \left\lVert \B{s} \right\rVert$ with a small positive $\rho$.
\end{assumption}

Here we provide the intuition behind this assumption. From~\cref{eq:contraction_map}, we can derive the following approximate relation with $\Delta(\cdot)_k = \lVert (\cdot)_{k} - (\cdot)_{k-1}\rVert$:
\begin{align*}
\Delta u_k &\leq \sigma(B_0^{-1})\big(L_a \Delta u_{k-1} + L_a \Delta \zeta_k \\
& \qquad + m \Delta \dot{v}_{r,k} + \lambda_{\max}(K_v)\Delta s_{k} + \Delta \tau_{d,k} \big).
\end{align*}
Because update rate of attitude controller ($ > \SI{100}{Hz}$) and motor speed control ($ > \SI{5}{kHz}$) are much higher than that of the position controller ($ \approx \SI{10}{Hz}$), in practice, we can safely neglect $\Delta s_k$, $\Delta \dot{v}_{r,k}$, and $\Delta \zeta_k$ in one update~(Theorem 11.1~\cite{khalil2002nonlinear}). Furthermore, $\Delta \tau_{d,k}$ can be limited internally by the attitude controller. It leads to:
\begin{equation*}
\Delta u_k \leq \sigma(B_0^{-1})\big(L_a \Delta u_{k-1} + c \big),
\end{equation*}
with $c$ being a small constant and $\sigma(B_0^{-1})\cdot L_a < 1$ from Lemma.~\ref{thm:lem1}, we can deduce that $\Delta u$ rapidly converges to a small ultimate bound between each position controller update. 

%  For $\Delta s$, we have $\Delta s \ll \Delta s_p \approx \dot{s}\,\Delta t_p$ with $\Delta s_p$ and $\Delta t_p$ as the one-step difference in position controller. It is later shown that $K_v$ controls the converging rate of $\lVert s\rVert$ and we assume ${\lVert \dot{s}\rVert} < c(K_v) \lVert s\rVert$. Therefore, we approximately have $\Delta s \ll \Delta s_p < c(K_v)\Delta t_p \lVert s \rVert$.
%  \anima{let us not call this approximation error. in learning, that means best predictor in function class given infinite samples. Just say learning error?}
\begin{assumption}
The learning error of $\hat{\B{f}}_a(\bm{\zeta},\B{u})$ over the compact sets $\bm{\zeta} \in \mathcal{Z}$, $\B{u} \in \mathcal{U}$ is upper bounded by $\epsilon_m = \sup_{\bm{\zeta} \in \mathcal{Z}, \B{u} \in \mathcal{U}}\lVert \bm{\epsilon}(\bm{\zeta}, \B{u})\rVert$, where $\bm{\epsilon}(\bm{\zeta}, \B{u})=\B{f}_a(\bm{\zeta}, \B{u})-\hat{\B{f}}_a(\bm{\zeta}, \B{u})$.
\end{assumption}

DNNs have been shown to generalize well to the set of unseen events that are from almost the same distribution as training set~\cite{zhang2016understanding,he2016deep}. This empirical observation is also theoretically studied in order to shed more light toward an understanding of the complexity of these models~\cite{neyshabur2017pac,bartlett2017spectrally,dziugaite2017computing,neyshabur2017exploring}. 
%\textcolor{red}{generalize well on a compact set of interest, the proof of such result is beyond the scope of the paper and thus omitted.} 
Based on the above assumptions, we can now present our overall stability and robustness result.
% \begin{theorem}
% Under Assumptions 1-3, for a time-varying $\B{p}_d(t)$, the controller defined in~\cref{eq:pos_control,eq:discrete_control} achieves exponential convergence to the error ball $\lim_{t\rightarrow\infty}\|\B{s}(t)\|=\epsilon_m/\left(\lambda_{\min}(K_v) - L_a \rho\right)$, with $\lambda_{\min}(K_v)>L_a\rho$ and 
% \begin{equation}
% \lim_{t\rightarrow\infty}\|\tilde{\B{p}}(t)\|=|\dot{e}_m|/\left(\lambda_\mathrm{min}(\Lambda)^2(\lambda_{\min}(K_v) - L_a \rho)\right)\label{eq:perrorinfty}
% \end{equation}
% where $\bm{\epsilon}(\bm{\zeta}, \B{u})$ is further broken into $\bm{\epsilon}(\bm{\zeta}, \B{u})=\B{e}_0+\B{e}(\bm{\zeta}, \B{u})$ with a constant $\B{e}_0$ and $|\dot{e}_m| = \sup_{\bm{\zeta} \in \mathcal{Z}, \B{u} \in \mathcal{U}}\lVert \dot{\B{e}}(\bm{\zeta}, \B{u})\rVert$.
% \end{theorem}

\begin{theorem}
Under Assumptions 1-3, for a time-varying $\B{p}_d(t)$, the controller defined in~\cref{eq:pos_control,eq:discrete_control} with $\lambda_{\min}(K_v)>L_a\rho$ achieves exponential convergence of composite variable $\B{s}$ to error ball $\lim_{t\rightarrow\infty}\|\B{s}(t)\|=\epsilon_m/\left(\lambda_{\min}(K_v) - L_a \rho\right)$ with rate $(\left(\lambda_{\min}(K_v) - L_a \rho\right)/m$. And $\tilde{\B{p}}$ exponentially converges to error ball
\begin{equation}
\lim_{t\rightarrow\infty}\|\tilde{\B{p}}(t)\|=\frac{\epsilon_m}{\lambda_\mathrm{min}(\Lambda)(\lambda_{\min}(K_v) - L_a \rho)}
\label{eq:perrorinfty}
\end{equation}
with rate $\lambda_{\min}(\Lambda)$.
\end{theorem}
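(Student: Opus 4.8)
The plan is to prove the theorem by a Lyapunov argument on the composite variable $\B{s}$, using the closed-loop dynamics derived in~\cref{eq:pos_control}. Recall that substituting the \emph{ideal} (infinite-frequency) control law yields $m\dot{\B{s}} + K_v\B{s} = \bm{\epsilon}$, but in the discrete-time implementation the force actually applied is $\hat{\B{f}}_a(\bm{\zeta},\B{u}_{k-1})$ rather than $\hat{\B{f}}_a(\bm{\zeta},\B{u}_k)$, so the true closed-loop dynamics carry an extra term. First I would write $m\dot{\B{s}} = -K_v\B{s} + \bm{\epsilon} + \big(\hat{\B{f}}_a(\bm{\zeta},\B{u}_k) - \hat{\B{f}}_a(\bm{\zeta},\B{u}_{k-1})\big)$, bound the last term using the $L_a$-Lipschitz property of $\hat{\B{f}}_a$ together with Assumption~2, giving $\|\hat{\B{f}}_a(\bm{\zeta},\B{u}_k) - \hat{\B{f}}_a(\bm{\zeta},\B{u}_{k-1})\| \leq L_a\|\B{u}_k - \B{u}_{k-1}\| \leq L_a\rho\|\B{s}\|$, and bound $\|\bm{\epsilon}\| \leq \epsilon_m$ by Assumption~3.

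Next I would take the Lyapunov function $\mathcal{V} = \tfrac{1}{2}m\,\B{s}^\top\B{s}$ and differentiate along trajectories:
\begin{equation*}
\dot{\mathcal{V}} = \B{s}^\top(m\dot{\B{s}}) = -\B{s}^\top K_v \B{s} + \B{s}^\top\bm{\epsilon} + \B{s}^\top\big(\hat{\B{f}}_a(\bm{\zeta},\B{u}_k) - \hat{\B{f}}_a(\bm{\zeta},\B{u}_{k-1})\big).
\end{equation*}
Using $\B{s}^\top K_v \B{s} \geq \lambda_{\min}(K_v)\|\B{s}\|^2$, Cauchy--Schwarz on the remaining two terms, and the bounds above, I get $\dot{\mathcal{V}} \leq -(\lambda_{\min}(K_v) - L_a\rho)\|\B{s}\|^2 + \epsilon_m\|\B{s}\|$. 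Since $\mathcal{V} = \tfrac12 m\|\B{s}\|^2$, this is a standard differential inequality of the form $\dot{\mathcal{V}} \leq -\tfrac{2}{m}(\lambda_{\min}(K_v)-L_a\rho)\mathcal{V} + \tfrac{\epsilon_m}{m}\sqrt{2m\mathcal{V}}$; converting back to $\|\B{s}\|$ via $\tfrac{d}{dt}\|\B{s}\| \leq -\tfrac{1}{m}(\lambda_{\min}(K_v)-L_a\rho)\|\B{s}\| + \tfrac{\epsilon_m}{m}$ (valid wherever $\|\B{s}\|\neq 0$, using the comparison lemma), the condition $\lambda_{\min}(K_v) > L_a\rho$ makes the linear coefficient negative, so $\|\B{s}(t)\|$ converges exponentially with rate $(\lambda_{\min}(K_v)-L_a\rho)/m$ to the ball of radius $\epsilon_m/(\lambda_{\min}(K_v)-L_a\rho)$.

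Finally, to get the position-error bound I would invoke the definition~\cref{eq:composite}, $\B{s} = \dot{\tilde{\B{p}}} + \Lambda\tilde{\B{p}}$, and treat this as a stable linear filter driven by $\B{s}$: since $\Lambda$ is positive definite, $\|\tilde{\B{p}}\|$ converges exponentially with rate $\lambda_{\min}(\Lambda)$ to a ball of radius $\sup_t\|\B{s}(t)\|/\lambda_{\min}(\Lambda)$, which in the limit gives~\cref{eq:perrorinfty} — this is a hierarchical combination of two exponentially stable systems (as in standard sliding/composite-variable arguments, cf.~\cite{slotine1991applied}). The main obstacle, and the step deserving the most care, is rigorously justifying the extra disturbance term $\hat{\B{f}}_a(\bm{\zeta},\B{u}_k) - \hat{\B{f}}_a(\bm{\zeta},\B{u}_{k-1})$ and its bound by $L_a\rho\|\B{s}\|$: this is exactly where Assumption~2 and the contraction property of Lemma~\ref{thm:lem1} enter, and where one must argue that the discrete-time control update is fast enough that the continuous-time Lyapunov analysis remains valid (the informal argument preceding the theorem, appealing to Theorem~11.1 of~\cite{khalil2002nonlinear}, handles the timescale separation that makes $\rho$ genuinely small).
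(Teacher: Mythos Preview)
Your proposal is correct and follows essentially the same route as the paper's own proof: the same Lyapunov function $\mathcal{V}=\tfrac{1}{2}m\|\B{s}\|^2$, the same closed-loop error dynamics with the extra discrete-update term $\hat{\B{f}}_a(\bm{\zeta},\B{u}_k)-\hat{\B{f}}_a(\bm{\zeta},\B{u}_{k-1})$ bounded via the Lipschitz property and Assumption~2, the same comparison-lemma reduction to a scalar differential inequality in $\|\B{s}\|$, and the same hierarchical cascade argument through~\cref{eq:composite} to obtain~\cref{eq:perrorinfty}. The only cosmetic difference is that you pass directly to an inequality in $\|\B{s}\|$ whereas the paper goes through $\mathcal{W}=\sqrt{\mathcal{V}}$, but these are the same computation.
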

\begin{proof}
We begin the proof by selecting a Lyapunov function as $\mathcal{V(\B{s})} = \frac{1}{2}m\lVert \B{s}\rVert ^2$, then by applying the controller~\cref{eq:pos_control}, we get the time-derivative of $\mathcal{V}$:
\begin{align*}
\dot{\mathcal{V}} &= \B{s}^\top \big(-K_v \B{s} + \hat{\B{f}}_a(\bm{\zeta}_k, \B{u}_k) - \hat{\B{f}}_a(\bm{\zeta}_k,\B{u}_{k-1}) + \bm{\epsilon}(\bm{\zeta}_k, \B{u}_k) \big) \\
&\leq -\B{s}^\top K_v \B{s} + \lVert \B{s}\rVert(\lVert \hat{\B{f}}_a(\bm{\zeta}_k, \B{u}_k) - \hat{\B{f}}_a(\bm{\zeta}_k,\B{u}_{k-1}) \rVert + \epsilon_m)
\end{align*}
Let $\lambda =\lambda_{\min}(K_v)$ denote the minimum eigenvalue of the positive-definite matrix $K_v$. By applying the Lipschitz property of $\hat{\B{f}}_a$~\cref{lemma:sn} and Assumption 2, we obtain
% \begin{align*}
% \dot{\mathcal{V}} \leq -\!\left(\lambda\! -\! L_a \rho\right)\lVert \B{s}\rVert ^2 + \lVert \B{s} \rVert \epsilon_m\leq -\frac{2\left(\lambda\!-\!L_a\rho\right)}{m}\mathcal{V} + \sqrt{\frac{2\mathcal{V}}{m}} \epsilon_m
% \end{align*}
\begin{align*}
\dot{\mathcal{V}} \leq -\frac{2\left(\lambda\!-\!L_a\rho\right)}{m}\mathcal{V} + \sqrt{\frac{2\mathcal{V}}{m}} \epsilon_m
\end{align*}
Using the Comparison Lemma\cite{khalil2002nonlinear}, we define $\mathcal{W}(t) = \sqrt{\mathcal{V}(t)}= \sqrt{m/2}\lVert\B{s}\rVert$ and $\dot{\mathcal{W}} = \dot{\mathcal{V}}/\left(2\sqrt{\mathcal{V}}\right)$ to obtain
\begin{align*}
\left\lVert\B{s}(t)\right\rVert
% &\leq \left\lVert\B{s}(t_0)\right\rVert \exp\left(-\frac{\lambda-L_a\rho}{m}(t-t_0)\right) \\
% & \qquad +\frac{1}{m}\int_{t_0}^{t}\exp\left(-\frac{\lambda-L_a\rho}{m}(t-\tau)\right)\epsilon_m\,d\tau \\
&\leq \left\lVert\B{s}(t_0)\right\rVert \exp\left(-\frac{\lambda-L_a\rho}{m}(t-t_0)\right) + \frac{\epsilon_m}{\lambda-L_a\rho} 
\end{align*} It can be shown that this leads to finite-gain $\mathcal{L}_p$ stability and input-to-state stability (ISS)~\cite{chung2013phase}. Furthermore, the hierarchical combination between $\B{s}$ and $\tilde{\B{p}}$ in \cref{eq:composite} results in  $\lim_{t\rightarrow\infty}\|\tilde{\B{p}}(t)\|=\lim_{t\rightarrow\infty}\|\B{s}(t)\|/\lambda_{\min}(\Lambda)$, yielding (\ref{eq:perrorinfty}).
\end{proof}

% By designing the controller gain $K$ and Lipschitz constant $L_a$ of the DNN, we ensure $\lambda - L_a\rho > 0$ and achieve exponential tracking within bound $B(\mu)$. 
%Next we will show the effectiveness of our overall system using closed-loop flight experimentation.

%%%%%%%%%%%%%%%%%%%%%%%%%%%%%%%%%%%%%%%%%%%%%%%%%%%%%%%%%%%%%%%%%%%%%%%%%%%%%%%%
\section{Experiments}
\label{sec:exp}
In our experiments, we evaluate both the generalization performance of our DNN as well as the overall control performance of \nn{}.
The experimental setup is composed of a motion capture system with 17 cameras, a WiFi router for communication, and an Intel Aero drone, weighing 1.47 kg with an onboard Linux computer (2.56 GHz Intel Atom x7 processor, 4 GB DDR3 RAM).  We retrofitted the drone with eight reflective infrared markers for accurate position, attitude and velocity estimation at 100Hz. The Intel Aero drone and the test space are shown in \cref{fig:training_data_and_drone}(a).

\subsection{Bench Test}
\label{sec:bench}
% \begin{figure}[!t]
% \centering{
% 	\subfloat[Drone Test-bed]{
%     	\includegraphics[height=0.15\textwidth]{intel-aero-rtf}
%         \label{fig:intel}
%     }
%     \hfil
% 	\subfloat[Dynamometer]{
% 		\includegraphics[height=0.15\textwidth]{bench}
% 		\label{fig:bench}
% 	}
% }
% \caption{Test apparatus used during experiment}
% \end{figure}
To identify a good nominal model, we first measured the mass, $m$, diameter of the rotor, $D$, the air density, $\rho$, gravity, $g$. Then we performed bench test to determine the thrust constant, $c_T$, as well as the non-dimensional thrust coefficient $C_T=\frac{c_T}{\rho D^4}$.
% \begin{table}[!t]\vspace{-5pt}
% \caption{Parameter table from the bench test}
% \label{table:parameter_table}
% \vspace{-0.05in}
% \begin{center}
% \begin{tabular}{|c|c|c|c|c|}
% \hline
% $m$ & $D$ & $\rho$ & $g$ & $C_T$\\
% \hline
% \SI{1.47}{kg} & \SI{0.23}{m} & \SI{1.225}{kg/m^3} & \SI{9.81}{m/s^2} & \num{0.089}\\
% \hline
% \end{tabular}
% \end{center}
% \end{table}
Note that $C_T$ is a function of propeller speed $n$, and here we picked a nominal value at $n=\SI{2000}{RPM}$ .

\subsection{Real-World Flying Data and Preprocessing}

To estimate the disturbance force $\B{f}_a$, an expert pilot manually flew the drone at different heights, and we collected training data consisting of sequences of state estimates and control inputs  $\{(\B{p},\B{v},R,\B{u}),\B{y}\}$ where $\B{y}$ is the observed value of $\B{f}_a$.
%As shown in \cref{eq:dynamics}, the state is represented by global position and velocity $\B{p}, \B{v} \in \mathbb{R}^3$ and attitude as rotation matrix $R \in \mathrm{SO}(3)$. The control input is $\B{u} = [n_1^2,n_2^2,n_3^2,n_4^2]^\top$, where $n$ is the rotation speed for each rotor.
% \begin{figure}[!t]
% \centering
% \includegraphics[width=0.4\textwidth, angle=0]{figures/CASTintel_new.png}
% %\vspace{-0.05in}
% \caption{Intel Aero drone during experiments.}
% \vspace{-0.1in}
% \label{fig:drone}
% \end{figure}
\begin{figure}[!t]
\includegraphics[width=0.9\columnwidth]{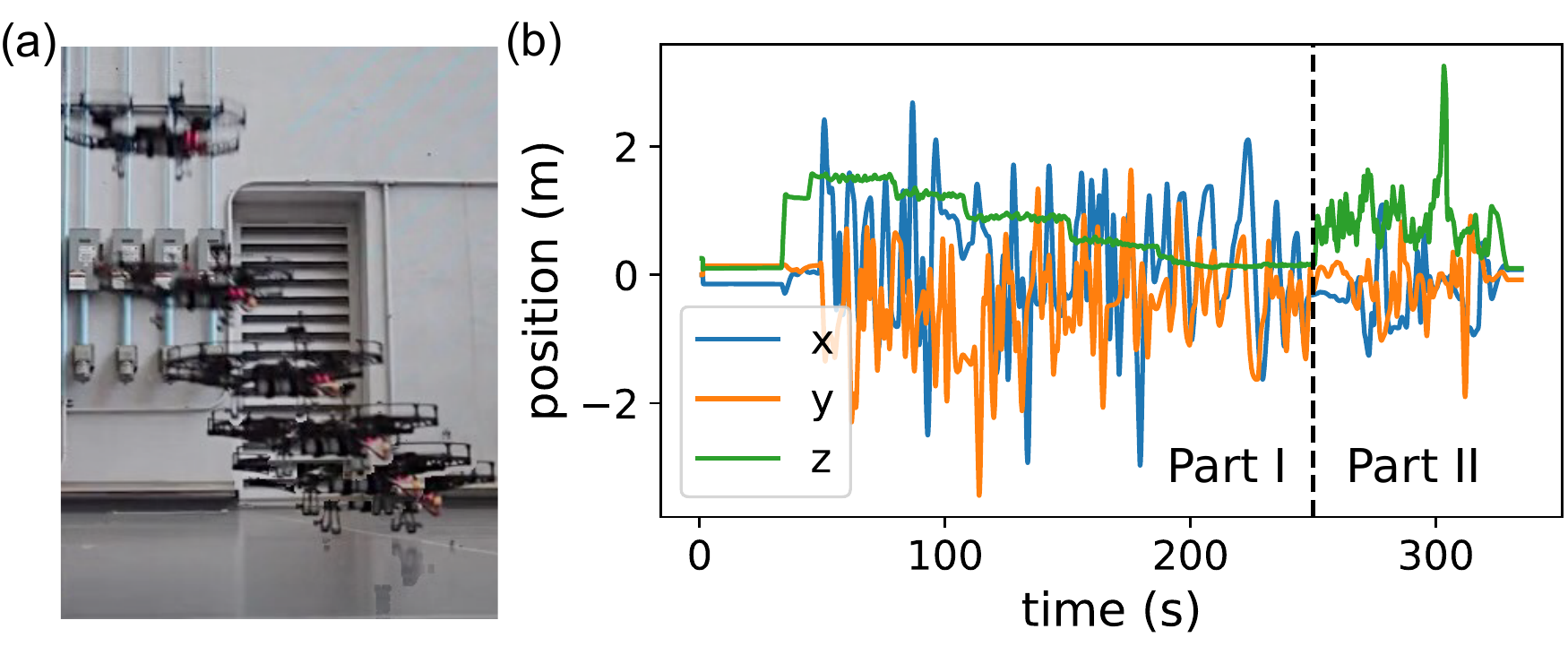}
\vspace{-0.05in}
\caption{(a) Intel Aero drone; (b) Training data trajectory. Part I ($0$ to \SI{250}{s}) contains maneuvers at different heights (\SI{0.05}{m} to \SI{1.50}{m}). Part II (\SI{250}{s} to \SI{350}{s}) includes random $x$, $y$, and $z$ motions for maximum state-space coverage.}
\label{fig:training_data_and_drone}
\end{figure}
We utilized the relation $\B{f}_a=m\dot{\B{v}}-m\B{g}-R\B{f}_u$ from \cref{eq:dynamics} to calculate  $\B{f}_a$, where $\B{f}_u$ is calculated based on the nominal $c_T$ from the bench test in~\cref{sec:bench}. Our training set is a single continuous trajectory with varying heights and velocities. The trajectory has two parts shown in \cref{fig:training_data_and_drone}(b). We aim to learn the ground effect through Part I of the training set, and other aerodynamics forces such as air drag through Part II.

% The entire dataset was split into training (60\%), test (20\%) and validation set (20\%) for model hyper-parameter tuning.

\begin{figure}
\centering
\includegraphics[width=0.9\columnwidth]{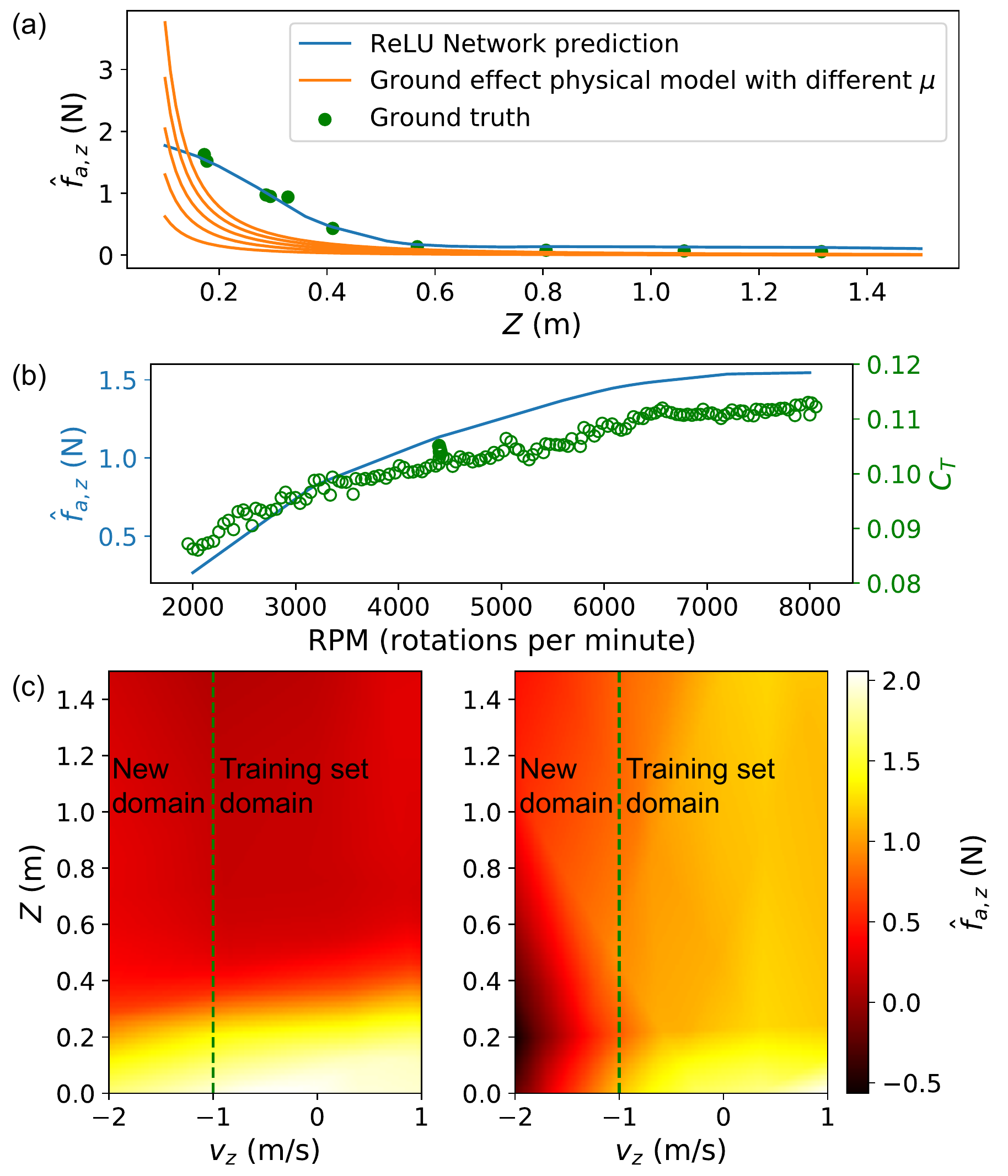}
\caption{(a) Learned $\hat{f}_{a,z}$ compared to the ground effect model with respect to height $z$, with $v_z=v_x=v_y=\SI{0}{m/s}$, $R=I$, $\B{u}=\SI{6400}{RPM}$. Ground truth points are from hovering data at different heights. (b) Learned $\hat{f}_{a,z}$ with respect to rotation speed $n$ ($z=\SI{0.2}{m}$, $v_z=\SI{0}{m/s}$), compared to $C_T$ measured in the bench test. (c) Heatmaps of learned $\hat{f}_{a,z}$ versus $z$ and $v_z$. (Left) ReLU network with spectral normalization. (Right) ReLU network without spectral normalization.}
\label{fig:pred_all}
\end{figure}
% \begin{figure}[!t]
% \centering{
% 	\subfloat[Learned $\hat{f}_{a,z}$ and ground effect physical model (\ref{eq:gt})]{
% 		\includegraphics[width=0.9\columnwidth]{ground_effect_new}
%         \vspace{-0.05in}
%         \label{fig:ground_effect}
% 	}
% }
% \hfil
% \\
% \centering{
% 	\subfloat[Learned $\hat{f}_{a,z}$ and bench test $C_T$]{
% 		\includegraphics[width=0.9\columnwidth]{ct_new}
%         \vspace{-0.5in}
%         \label{fig:ct}
% 	}
% }
% \\
% \centering{
% 	\subfloat[Heatmap of learned $\hat{f}_{a,z}$ with and without SN]{
% 		\includegraphics[width=0.9\columnwidth]{learned_fa_full_new}
%         \label{fig:learned_fa}
% 	}
% }
% \caption{\protect\subref{fig:ground_effect} Learned $\hat{f}_{a,z}$ compared to the ground effect model with respect to height $z$. $v_z=0$ m/s and other dimensions of state are fixed ($v_x,v_y=0$ m/s, $R=I$, $\B{u}=6400$ RPM). Ground truth points are from hovering data at different heights. \protect\subref{fig:ct} Learned $\hat{f}_{a,z}$ with respect to rotation speed $n$ ($z=0.2$ m, $v_z=0$ m/s and other dimensions of state are fixed), compared to $C_T$ measured in the bench test. \protect\subref{fig:learned_fa} Heatmaps of learned $\hat{f}_{a,z}$ versus $z$ and $v_z$, and other dimensions are fixed. (Left) Learned $\hat{f}_{a,z}$ from ReLU network with Spectral Normalization with $\|f\|_\mathrm{Lip}=1$. (Right) Learned $\hat{f}_{a,z}$ from ReLU network without spectral normalization with $\|f\|_\mathrm{Lip}=4.97$.}\vspace{-10pt}
% \end{figure}

\subsection{DNN Prediction Performance}
\label{sec:relu_training}
We train a deep  ReLU network $\hat{\B{f}}_a(\bm{\zeta},\B{u})=\hat{\B{f}}_a(z,\B{v},R,\B{u})$, with $z$, $\B{v}$, $R$, $\B{u}$ corresponding to global height, global velocity, attitude, and control input.  We build the ReLU network using PyTorch~\cite{paszke2017automatic}. Our ReLU network consists of four fully-connected hidden layers, with input and the output dimensions 12 and 3, respectively. We use spectral normalization~\cref{eq:sn} to constrain the Lipschitz constant of the DNN. %We also experimented with another ReLU network without spectral normalization, to further understand the benefits of SN.

We compare the near-ground estimation accuracy our DNN model with existing 1D steady ground effect model~\cite{cheeseman1955effect,danjun2015autonomous}:
\begin{align}
T(n,z)=\frac{n^2}{1-\mu(\frac{D}{8z})^2}c_T(n)=n^2c_T(n_0)+\bar{f}_{a,z},
\label{eq:gt}
\end{align}
where $T$ is the thrust generated by propellers, $n$ is the rotation speed,  $n_0$ is the idle RPM, and $\mu$ depends on the number and the arrangement of propellers ($\mu=1$ for a single propeller, but must be tuned for multiple propellers).
%, which is to be determined, and for a single propeller, $\mu=1$. 
Note that $c_T$ is a function of $n$. Thus, we can derive $\bar{f}_{a,z}(n,z)$ from $T(n,z)$.

\cref{fig:pred_all}(a) shows the comparison between the estimated $\B{f}_a$ from DNN and the theoretical ground effect model \cref{eq:gt} at different $z$ (assuming $T=mg$ when $z=\infty$). We can see that our DNN can achieve much better estimates than the theoretical ground effect model. We further investigate the trend of $\bar{f}_{a,z}$ with respect to the rotation speed $n$. %Fortunately, our ReLU network can also capture this relationship. 
\cref{fig:pred_all}(b) shows the learned $\hat{f}_{a,z}$ over the rotation speed $n$ at a given height, in comparison with the $C_T$ measured from the bench test. We observe that the increasing trend of the estimates $\hat{f}_{a,z}$ is consistent with bench test results for $C_T$.

To understand the benefits of SN, we compared $\hat{f}_{a,z}$ predicted by the DNNs trained both with and without SN as shown in~\cref{fig:pred_all}(c).
%where $z$ ranges from 0 m to 1.5 m, $v_z$ ranges from -2 m/s to 1 m/s. The results are shown in 
Note that $v_z$ from \SI{-1}{m/s} to \SI{1}{m/s} is covered in our training set, but \SI{-2}{m/s} to \SI{-1}{m/s} is not. We observe the following differences:
\begin{enumerate}
    \item  Ground effect: $\hat{f}_{a,z}$ increases as $z$ decreases, which is also shown in \cref{fig:pred_all}(a).
    \item Air drag: $\hat{f}_{a,z}$ increases as the drone goes down ($v_z<0$) and it decreases as the drone goes up ($v_z>0$).
    \item Generalization: the spectral normalized DNN is much smoother and can also generalize to new input domains not contained in the training set. 
\end{enumerate}
In \cite{bartlett2017spectrally}, the authors theoretically show that spectral normalization can provide tighter generalization guarantees on unseen data, which is consistent with our empirical observation. We will connect generalization theory more tightly with our robustness guarantees in the future.
% \begin{enumerate}
% \item {Ground effect:} $\hat{f}_{a,z}$ increases as $z$ decreases, which is also shown in \cref{fig:pred_all}(a).
% \item {Air drag:} $\hat{f}_{a,z}$ increases as the drone goes down ($v_z<0$) and it decreases as the drone goes up ($v_z>0$).
% \item {Generalization:} the spectral normalized DNN is much smoother and can also generalize to new input domains not contained in the training set. In \cite{bartlett2017spectrally}, the authors theoretically show that spectral normalization can provide tighter generalization guarantees on unseen data, which is consistent with our empirical results.
% \end{enumerate}

\begin{figure}[!t]
\centering{
	{
    	\includegraphics[width=0.9\columnwidth]{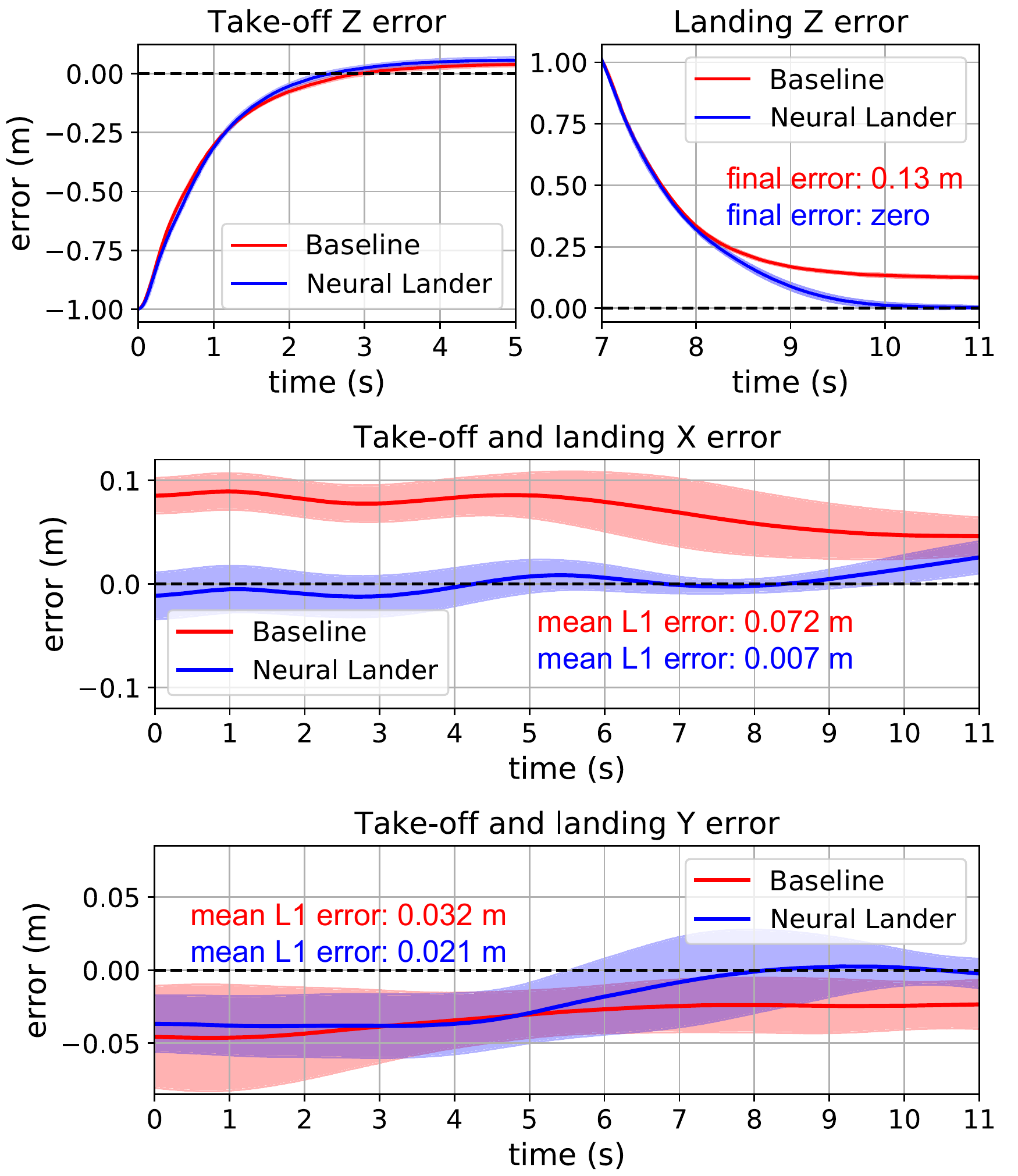}
    }
}
\caption{\baseline{} and {\nn} performance in take-off and landing. Means (solid curves) and standard deviations (shaded areas) of 10 trajectories.}
\label{fig:1-d}
\end{figure}

% \begin{figure}
% \centering
% \includegraphics[width=0.9\columnwidth]{3d_all_in}
% \caption{PD and {\nn} performance in 3D take-off and landing. Means (solid curves) and standard deviations (shaded areas) of 10 trajectories.}
% \label{fig:3-d}
% \end{figure}

% \begin{figure}[!t]
%     \centering
%     \begin{subfigure}[b]{\columnwidth}
%         \includegraphics[width=0.8\columnwidth]{3d_pd}
%         \caption{PD controller performance in 3D take-off and landing}
%         \label{fig:3d_pd}
%     \end{subfigure}
%     \begin{subfigure}[b]{\columnwidth}
%         \includegraphics[width=0.8\columnwidth]{3d_pd_fa}
%         \caption{Performance in 3D take-off and landing}
%         \label{fig:3d_pd_fa}
%     \end{subfigure}
%     \caption{PD and performance in 3D take-off and landing. Means (solid curves) and standard deviations (shaded areas) of 10 trajectories.}
%     \label{fig:3-d}
% \end{figure}

% \begin{figure}[!t]
% \centering{
% 	\subfloat[PD controller performance in 3D take-off and landing]{
%     	\includegraphics[width=0.9\columnwidth]{3d_pd}
%         \label{fig:3d_pd}
%     }
% }
% \centering{
% 	\subfloat[{\nn} performance in 3D take-off and landing]{
%     	\includegraphics[width=0.9\columnwidth]{3d_pd_fa}
%         \label{fig:3d_pd_fa}
%     }
% }
% \caption{PD and {\nn} performance in 3D take-off and landing. Means (solid curves) and standard deviations (shaded areas) of 10 trajectories.}
% \label{fig:3-d}
% \end{figure}
\subsection{Baseline Controller}

We compared the \nn{} with a \baselinelong{}. We implemented both a \baseline{} similar to~\cref{eq:composite,eq:pos_control} with $\hat{\B{f}}_a\equiv0$, as well as an integral controller variation with $\B{v}_r = \dot{\B{p}}_d-2\Lambda \tilde{\B{p}}-\Lambda^2\int^{t}_{0}\tilde{\B{p}}(\tau)d\tau$. Though an integral gain can cancel steady-state error during set-point regulation, our flight results showed that the performance can be sensitive to the integral gain, especially during trajectory tracking. This can be seen in the demo video.\footnote{Demo videos: \url{https://youtu.be/FLLsG0S78ik}}

\subsection{Setpoint Regulation Performance}
\begin{figure}[!t]
\centering{
	{
    	\includegraphics[width=0.9\columnwidth]{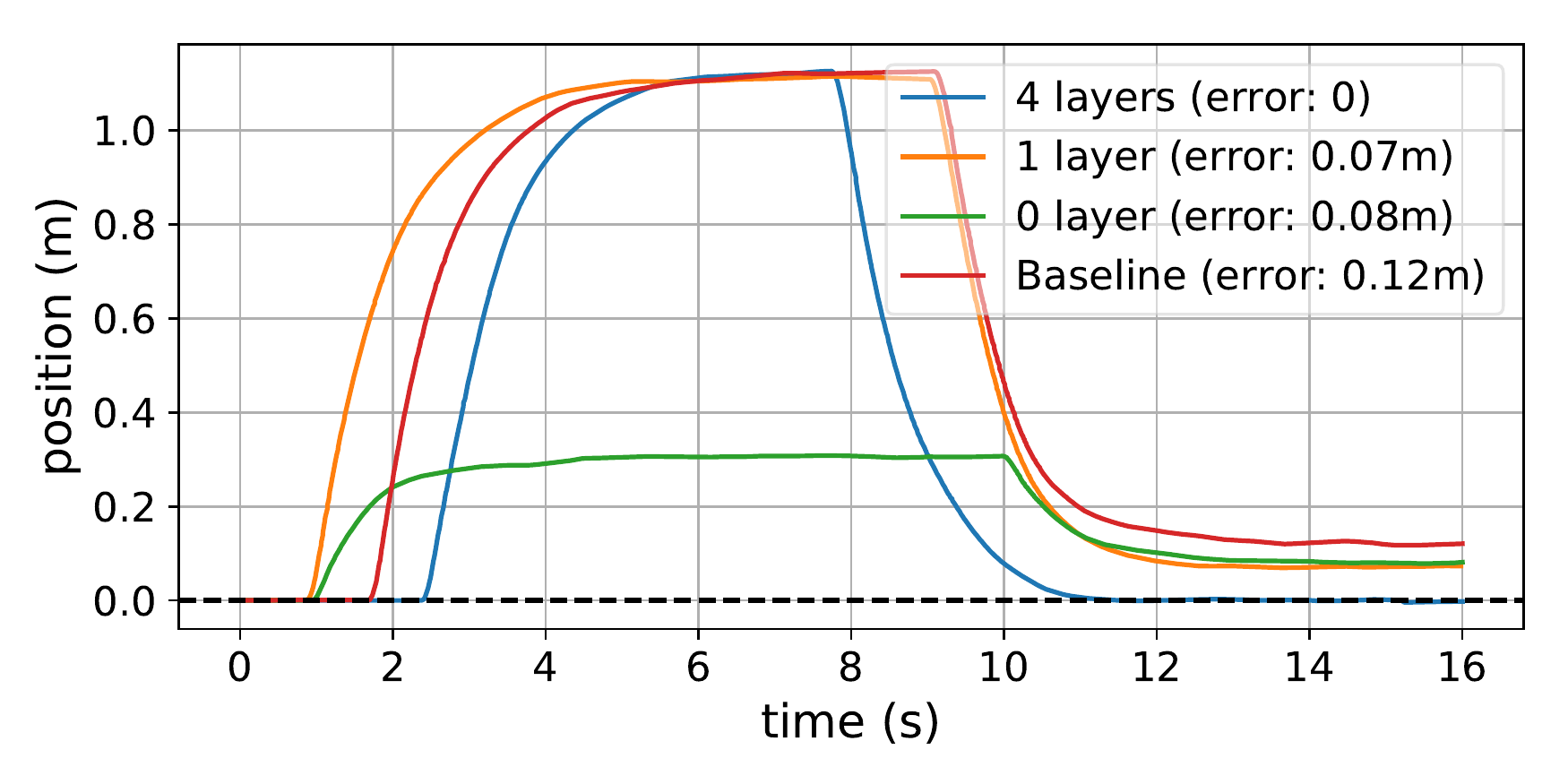}
    }
}
\caption{{\nn} performance in take-off and landing with different DNN capacities. 1 layer means $\hat{\B{f}}_a=A\B{x}+\B{b}$; 0 layer means $\hat{\B{f}}_a=\B{b}$; Baseline means $\hat{\B{f}}_a\equiv0$.}
\label{fig:different_model}
\end{figure}

First, we tested the two controllers' performance in take-off/landing, by commanding position setpoint $\B{p}_d$, from $(0,0,0)$, to $(0,0,1)$, then back to $(0,0,0)$, with $\dot{\B{p}_d} \equiv 0$. 
% Second, we compare the controllers' performance in 3D \mike{\cancel{3D}} take-off/landing \mike{with lateral motion}, by moving the set point from $(0,0,0)$ to $(0.5,-0.5,1)$ and then back to $(0,0,0)$, with $\dot{p_d} \equiv 0$, as shown in \cref{fig:3-d}. 
% We repeated the experiment $10$ times and computed the means and the standard deviations of the take-off/landing trajectories.
% \mike{We need to include the standard deviation and means here}.
From~\cref{fig:1-d}, we can conclude that there are two main benefits of our  {\nn}. (a) {\nn} can  control the drone to precisely and smoothly land on the ground surface while the  \baseline{} struggles to achieve $0$ terminal height due to the ground effect. (b) {\nn} can mitigate drifts in $x-y$ plane, as it also learned about additional aerodynamics such as air drag.

Second, we tested \nn{} performance with different DNN capacities. \cref{fig:different_model} shows that compared to the baseline ($\hat{\B{f}}_a\equiv0$), 1 layer model could decrease $z$ error but it is not enough to land the drone. 0 layer model generated significant error during take-off. 

In experiments, we observed the \nn{} without spectral normalization can even result in unexpected controller outputs leading to crash, which empirically implies the necessity of SN in training the DNN and designing the controller.

\subsection{Trajectory Tracking Performance}
\begin{figure}[!t]
\centering
\includegraphics[width=0.4\textwidth, angle=0]{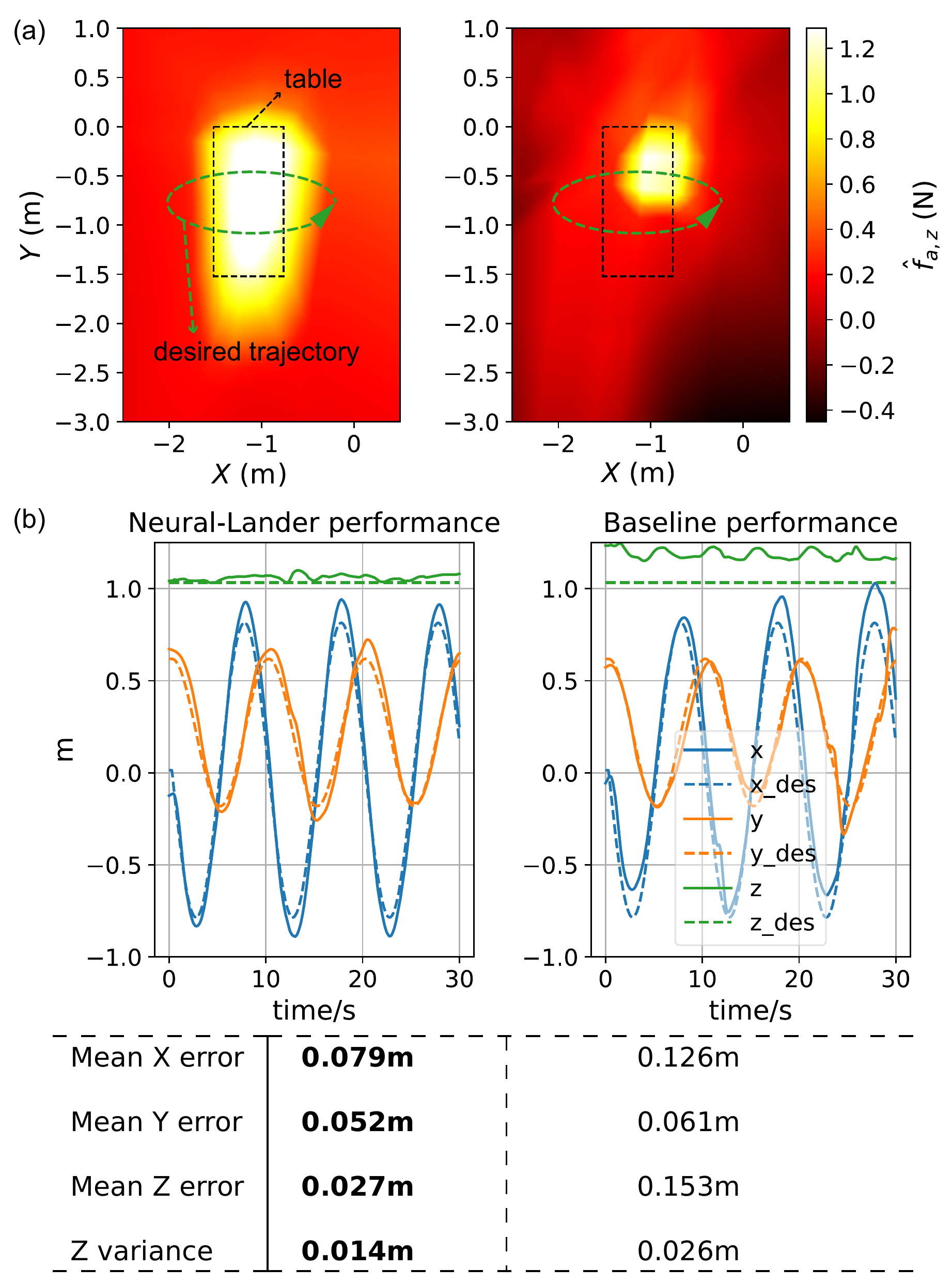}
%\vspace{-0.05in}
\caption{(a) Heatmaps of learned $\hat{f}_{a,z}$ versus $x$ and $y$, with other inputs fixed. (Left) ReLU network with spectral normalization. (Right) ReLU network without spectral normalization. (b) Tracking performance and statistics.}
\vspace{-0.1in}
\label{fig:table_fa_and_tracking}
\end{figure}

% So far the experimental results of this work have focused on learning unmodelled dynamics of a quadrotor under the influence of ground effect when the ground surface is significantly wider than the flight area.  However the theoretical results can be applied much more generally.  
To show that our algorithm can handle more complicated environments where physics-based modelling of dynamics would be substantially more difficult, we devise a task of tracking an elliptic trajectory very close to a table with a period of 10 seconds shown in~\cref{fig:table_fa_and_tracking}. The trajectory is partially over the table with significant ground effects, and a sharp transition to free space at the edge of the table. We compared the performance of both \nn{} and \baseline{} on this test. 

In order to model the complex dynamics near the table, we manually flew the drone in the space close to the table to collect another data set.  We trained a new ReLU DNN model  with $x$-$y$ positions as additional input features: $\hat{\B{f}}_a(\B{p},\B{v},R,\B{u})$.  Similar to the setpoint experiment, the benefit of spectral normalization can be seen in \cref{fig:table_fa_and_tracking}(a), where only the spectrally-normalized DNN exhibits a clear table boundary.

\cref{fig:table_fa_and_tracking}(b) shows that \nn{} outperformed the \baseline{} for tracking the desired position trajectory in all $x$, $y$, and $z$ axes. Additionally, \nn{} showed a lower variance in height, even at the edge of the table, as the controller captured the changes in ground effects when the drone flew over the table.

In summary, the experimental results with multiple ground interaction scenarios show that much smaller tracking errors are obtained by \nn{}, which is essentially the nonlinear tracking controller with feedforward cancellation of a spectrally-normalized DNN.
%%%%%%%%%%%%%%%%%%%%%%%%%%%%%%%%%%%%%%%%%%%%%%%%%%%%%%%%%%%%%%%%%%%%%%%%%%%%%%%%
% \section{Extra Experiments}
% \label{sec:extra_exp}
% \input{sections/extra_exp.tex}
%%%%%%%%%%%%%%%%%%%%%%%%%%%%%%%%%%%%%%%%%%%%%%%%%%%%%%%%%%%%%%%%%%%%%%%%%%%%%%%%
\section{Conclusions}
\label{sec:con}
In this paper, we present \nn{}, a deep learning based nonlinear controller with guaranteed stability for precise quadrotor landing. Compared to the \baseline{}, \nn{} is able to significantly improve control performance. 
%With real-world flying data, by Spectral Normalization, the ReLU-DNN is able to estimate unknown residue forces with the Lipschitz constant bounded from above by 1. Then we provided a complete framework to guarantee stability and convergence of ReLU-DNN based nonlinear controller.  We employ a novel application of spectral normalization when training the ReLU DNN. 
The main benefits are (1) our method can learn from coupled unsteady aerodynamics and vehicle dynamics to provide more accurate estimates than theoretical ground effect models, (2) our model can capture both the ground effect and other non-dominant aerodynamics and outperforms the conventional controller in all axes ($x$, $y$ and $z$), and (3) we provide rigorous theoretical analysis of our method and guarantee the stability of the controller, which also implies generalization to unseen domains. 

Future work includes further generalization of the capabilities of \nn{} handling unseen state and disturbance domains, such as those generated by a wind fan array.

% \yisong{talk about studying generalization, employing RNNs, other types of disturbances (wind), etc.} 
% \anima{add a sentence here how spectral normalization has also been shown to generalize well (cite Peter Bartlett's paper) and in ML generalization is a notion of stability. It is intriguing that the same property appears to stabilize both in a learning-theoretic and a control-theoretic sense. } \yisong{Perhaps it's better to put that comment in the conclusions or related work?  I feel like it disrupts the  exposition to put it in our contributions paragraph, unless we do a rigorous discussion of this in the paper (which we currently don't do).}
% \textcolor{red}{The conclusion should end in page 6.}
% ; and (4) our framework is generalizable to other neural network architectures and more complex dynamical systems.
% There are many directions for future work.  
%However, our current network is feed-forward neural network and training is done off-line. In the future, we will also consider sequential model, such as LSTM, and on-line learning.

% Comment the following line for arXiv
%\addtolength{\textheight}{-7cm}   % This command serves to balance the column lengths
                                  % on the last page of the document manually. It shortens
                                  % the textheight of the last page by a suitable amount.
                                  % This command does not take effect until the next page
                                  % so it should come on the page before the last. Make
                                  % sure that you do not shorten the textheight too much.

%%%%%%%%%%%%%%%%%%%%%%%%%%%%%%%%%%%%%%%%%%%%%%%%%%%%%%%%%%%%%%%%%%%%%%%%%%%%%%%%
\section*{ACKNOWLEDGEMENT}
The authors thank Joel Burdick, Mory Gharib and Daniel Pastor Moreno. The work is funded in part by Caltech's Center for Autonomous Systems and Technologies and Raytheon Company.

%%%%%%%%%%%%%%%%%%%%%%%%%%%%%%%%%%%%%%%%%%%%%%%%%%%%%%%%%%%%%%%%%%%%%%%%%%%%%%%%
% Comment the following line for arXiv
%\newpage

\bibliographystyle{IEEEtran}
% \balance
\bibliography{IEEEabrv,ref}

\begin{thebibliography}{10}
\providecommand{\url}[1]{#1}
\csname url@rmstyle\endcsname
\providecommand{\newblock}{\relax}
\providecommand{\bibinfo}[2]{#2}
\providecommand\BIBentrySTDinterwordspacing{\spaceskip=0pt\relax}
\providecommand\BIBentryALTinterwordstretchfactor{4}
\providecommand\BIBentryALTinterwordspacing{\spaceskip=\fontdimen2\font plus
\BIBentryALTinterwordstretchfactor\fontdimen3\font minus
  \fontdimen4\font\relax}
\providecommand\BIBforeignlanguage[2]{{%
\expandafter\ifx\csname l@#1\endcsname\relax
\typeout{** WARNING: IEEEtran.bst: No hyphenation pattern has been}%
\typeout{** loaded for the language `#1'. Using the pattern for}%
\typeout{** the default language instead.}%
\else
\language=\csname l@#1\endcsname
\fi
#2}}

\bibitem{cheeseman1955effect}
I.~Cheeseman and W.~Bennett, ``The effect of ground on a helicopter rotor in
  forward flight,'' 1955.

\bibitem{nonaka2011integral}
K.~Nonaka and H.~Sugizaki, ``Integral sliding mode altitude control for a small
  model helicopter with ground effect compensation,'' in \emph{American Control
  Conference (ACC), 2011}.\hskip 1em plus 0.5em minus 0.4em\relax IEEE, 2011,
  pp. 202--207.

\bibitem{danjun2015autonomous}
L.~Danjun, Z.~Yan, S.~Zongying, and L.~Geng, ``Autonomous landing of quadrotor
  based on ground effect modelling,'' in \emph{Control Conference (CCC), 2015
  34th Chinese}.\hskip 1em plus 0.5em minus 0.4em\relax IEEE, 2015, pp.
  5647--5652.

\bibitem{Berkenkamp2016SafeOpt}
\BIBentryALTinterwordspacing
F.~Berkenkamp, A.~P. Schoellig, and A.~Krause, ``Safe controller optimization
  for quadrotors with {G}aussian processes,'' in \emph{Proc. of the IEEE
  International Conference on Robotics and Automation (ICRA)}, 2016, pp.
  493--496. [Online]. Available: \url{https://arxiv.org/abs/1509.01066}
\BIBentrySTDinterwordspacing

\bibitem{abbeel2010autonomous}
P.~Abbeel, A.~Coates, and A.~Y. Ng, ``Autonomous helicopter aerobatics through
  apprenticeship learning,'' \emph{The International Journal of Robotics
  Research}, vol.~29, no.~13, pp. 1608--1639, 2010.

\bibitem{punjani2015deep}
A.~Punjani and P.~Abbeel, ``Deep learning helicopter dynamics models,'' in
  \emph{Robotics and Automation (ICRA), 2015 IEEE International Conference
  on}.\hskip 1em plus 0.5em minus 0.4em\relax IEEE, 2015, pp. 3223--3230.

\bibitem{bansal2016learning}
S.~Bansal, A.~K. Akametalu, F.~J. Jiang, F.~Laine, and C.~J. Tomlin, ``Learning
  quadrotor dynamics using neural network for flight control,'' in
  \emph{Decision and Control (CDC), 2016 IEEE 55th Conference on}.\hskip 1em
  plus 0.5em minus 0.4em\relax IEEE, 2016, pp. 4653--4660.

\bibitem{li2017deep}
Q.~Li, J.~Qian, Z.~Zhu, X.~Bao, M.~K. Helwa, and A.~P. Schoellig, ``Deep neural
  networks for improved, impromptu trajectory tracking of quadrotors,'' in
  \emph{Robotics and Automation (ICRA), 2017 IEEE International Conference
  on}.\hskip 1em plus 0.5em minus 0.4em\relax IEEE, 2017, pp. 5183--5189.

\bibitem{zhou2017design}
S.~Zhou, M.~K. Helwa, and A.~P. Schoellig, ``Design of deep neural networks as
  add-on blocks for improving impromptu trajectory tracking,'' in
  \emph{Decision and Control (CDC), 2017 IEEE 56th Annual Conference on}.\hskip
  1em plus 0.5em minus 0.4em\relax IEEE, 2017, pp. 5201--5207.

\bibitem{sanchez2018real}
C.~S{\'a}nchez-S{\'a}nchez and D.~Izzo, ``Real-time optimal control via deep
  neural networks: study on landing problems,'' \emph{Journal of Guidance,
  Control, and Dynamics}, vol.~41, no.~5, pp. 1122--1135, 2018.

\bibitem{balakrishnan1996neurocontrol}
S.~Balakrishnan and R.~Weil, ``Neurocontrol: A literature survey,''
  \emph{Mathematical and Computer Modelling}, vol.~23, no. 1-2, pp. 101--117,
  1996.

\bibitem{frye2014direct}
M.~T. Frye and R.~S. Provence, ``Direct inverse control using an artificial
  neural network for the autonomous hover of a helicopter,'' in \emph{Systems,
  Man and Cybernetics (SMC), 2014 IEEE International Conference on}.\hskip 1em
  plus 0.5em minus 0.4em\relax IEEE, 2014, pp. 4121--4122.

\bibitem{suprijono2017direct}
H.~Suprijono and B.~Kusumoputro, ``Direct inverse control based on neural
  network for unmanned small helicopter attitude and altitude control,''
  \emph{Journal of Telecommunication, Electronic and Computer Engineering
  (JTEC)}, vol.~9, no. 2-2, pp. 99--102, 2017.

\bibitem{Berkenkamp2017SafeRL}
\BIBentryALTinterwordspacing
F.~Berkenkamp, M.~Turchetta, A.~P. Schoellig, and A.~Krause, ``Safe model-based
  reinforcement learning with stability guarantees,'' in \emph{Proc. of Neural
  Information Processing Systems (NIPS)}, 2017. [Online]. Available:
  \url{https://arxiv.org/abs/1705.08551}
\BIBentrySTDinterwordspacing

\bibitem{miyato2018spectral}
T.~Miyato, T.~Kataoka, M.~Koyama, and Y.~Yoshida, ``Spectral normalization for
  generative adversarial networks,'' \emph{arXiv preprint arXiv:1802.05957},
  2018.

\bibitem{krizhevsky2012imagenet}
A.~Krizhevsky, I.~Sutskever, and G.~E. Hinton, ``Imagenet classification with
  deep convolutional neural networks,'' in \emph{Advances in neural information
  processing systems}, 2012, pp. 1097--1105.

\bibitem{salimans2016weight}
T.~Salimans and D.~P. Kingma, ``Weight normalization: A simple
  reparameterization to accelerate training of deep neural networks,'' in
  \emph{Advances in Neural Information Processing Systems}, 2016, pp. 901--909.

\bibitem{bartlett2017spectrally}
P.~L. Bartlett, D.~J. Foster, and M.~J. Telgarsky, ``Spectrally-normalized
  margin bounds for neural networks,'' in \emph{Advances in Neural Information
  Processing Systems}, 2017, pp. 6240--6249.

\bibitem{slotine1991applied}
J.~Slotine and W.~Li, \emph{Applied Nonlinear Control}.\hskip 1em plus 0.5em
  minus 0.4em\relax Prentice Hall, 1991.

\bibitem{bandyopadhyay2016nonlinear}
S.~Bandyopadhyay, S.-J. Chung, and F.~Y. Hadaegh, ``Nonlinear attitude control
  of spacecraft with a large captured object,'' \emph{Journal of Guidance,
  Control, and Dynamics}, vol.~39, no.~4, pp. 754--769, 2016.

\bibitem{shi2018nonlinear}
X.~Shi, K.~Kim, S.~Rahili, and S.-J. Chung, ``Nonlinear control of autonomous
  flying cars with wings and distributed electric propulsion,'' in \emph{2018
  IEEE Conference on Decision and Control (CDC)}.\hskip 1em plus 0.5em minus
  0.4em\relax IEEE, 2018, pp. 5326--5333.

\bibitem{morgan2016swarm}
D.~Morgan, G.~P. Subramanian, S.-J. Chung, and F.~Y. Hadaegh, ``Swarm
  assignment and trajectory optimization using variable-swarm, distributed
  auction assignment and sequential convex programming,'' \emph{Int. J.
  Robotics Research}, vol.~35, no.~10, pp. 1261--1285, 2016.

\bibitem{khalil2002nonlinear}
H.~Khalil, \emph{Nonlinear Systems}, ser. Pearson Education.\hskip 1em plus
  0.5em minus 0.4em\relax Prentice Hall, 2002.

\bibitem{zhang2016understanding}
C.~Zhang, S.~Bengio, M.~Hardt, B.~Recht, and O.~Vinyals, ``Understanding deep
  learning requires rethinking generalization,'' \emph{arXiv preprint
  arXiv:1611.03530}, 2016.

\bibitem{he2016deep}
K.~He, X.~Zhang, S.~Ren, and J.~Sun, ``Deep residual learning for image
  recognition,'' in \emph{Proceedings of the IEEE conference on computer vision
  and pattern recognition}, 2016, pp. 770--778.

\bibitem{neyshabur2017pac}
B.~Neyshabur, S.~Bhojanapalli, D.~McAllester, and N.~Srebro, ``A pac-bayesian
  approach to spectrally-normalized margin bounds for neural networks,''
  \emph{arXiv preprint arXiv:1707.09564}, 2017.

\bibitem{dziugaite2017computing}
G.~K. Dziugaite and D.~M. Roy, ``Computing nonvacuous generalization bounds for
  deep (stochastic) neural networks with many more parameters than training
  data,'' \emph{arXiv preprint arXiv:1703.11008}, 2017.

\bibitem{neyshabur2017exploring}
B.~Neyshabur, S.~Bhojanapalli, D.~McAllester, and N.~Srebro, ``Exploring
  generalization in deep learning,'' in \emph{Advances in Neural Information
  Processing Systems}, 2017, pp. 5947--5956.

\bibitem{chung2013phase}
S.-J. Chung, S.~Bandyopadhyay, I.~Chang, and F.~Y. Hadaegh, ``Phase
  synchronization control of complex networks of {Lagrangian} systems on
  adaptive digraphs,'' \emph{Automatica}, vol.~49, no.~5, pp. 1148--1161, 2013.

\bibitem{paszke2017automatic}
A.~Paszke, S.~Gross, S.~Chintala, G.~Chanan, E.~Yang, Z.~DeVito, Z.~Lin,
  A.~Desmaison, L.~Antiga, and A.~Lerer, ``Automatic differentiation in
  pytorch,'' 2017.

\end{thebibliography}

\end{document}